\DeclareMathOperator*{\RC}{RC}
\DeclareMathOperator*{\rcut}{RatioCut}
\newcommand{\Ea}[1]{\E\left[#1\right]}
\newcommand{\Eb}[2]{\E_{#1}\left[#2\right]}
\newcommand{\Ppar}[1]{\operatorname{Pr}\lrp{#1}}
\newcommand{\Prob}[1]{\operatorname{Pr}\lrp{#1}}
\newcommand{\by}[1]{{\frac{1}{#1}}}
\newcommand{\comp}[1]{\overline{#1}}
\newcommand{\cosine}[1]{\mathop{cosine}\lrp{#1}}
\newcommand{\defeq}{\stackrel{\text{def}}{=}}
\newcommand{\diag}{\mathop{\mathrm{diag}}}
\newcommand{\kl}[2]{D_{\rm KL}(#1 \ \| \ #2)}
\newcommand{\lrb}[1]{\left[#1\right]}
\newcommand{\lrcb}[1]{\left\lbrace#1\right\rbrace}
\newcommand{\lrp}[1]{\left(#1\right)}
\newcommand{\normtwo}[1]{\left\lVert#1\right\rVert}
\newcommand{\ov}[1]{\overline{#1}}
\newcommand{\stopgrad}{\operatorname{sg}}
\def\eqref#1{equation~\ref{#1}}
\def\1#1{\bm{1}_{#1}}
\def\rZ{{\textnormal{Z}}}
\def\rva{{\mathbf{a}}}
\def\rvf{{\mathbf{f}}}
\def\rvr{{\mathbf{r}}}
\def\vone{{\bm{1}}}
\def\vc{{\bm{c}}}
\def\vf{{\bm{f}}}
\def\vl{{\bm{l}}}
\def\vp{{\bm{p}}}
\def\vy{{\bm{y}}}
\def\evp{{p}}
\def\mD{{\bm{D}}}
\def\mF{{\bm{F}}}
\def\mI{{\bm{I}}}
\def\mL{{\bm{L}}}
\def\mP{{\bm{P}}}
\def\mW{{\bm{W}}}
\def\mSigma{{\bm{\Sigma}}}
\def\mPi{{\bm{\Pi}}}
\DeclareMathAlphabet{\mathsfit}{\encodingdefault}{\sfdefault}{m}{sl}
\SetMathAlphabet{\mathsfit}{bold}{\encodingdefault}{\sfdefault}{bx}{n}
\def\gC{{\mathcal{C}}}
\def\gE{{\mathcal{E}}}
\def\gG{{\mathcal{G}}}
\def\gH{{\mathcal{H}}}
\def\gI{{\mathcal{I}}}
\def\gK{{\mathcal{K}}}
\def\gL{{\mathcal{L}}}
\def\gU{{\mathcal{U}}}
\def\gV{{\mathcal{V}}}
\def\sA{{\mathbb{A}}}
\def\sC{{\mathbb{C}}}
\def\sI{{\mathbb{I}}}
\def\sP{{\mathbb{P}}}
\def\sR{{\mathbb{R}}}
\def\emP{{P}}
\def\emW{{W}}
\newcommand{\E}{\mathbb{E}}
\newcommand{\R}{\mathbb{R}}
\DeclareMathOperator{\Tr}{Tr}
\crefname{table}{table}{tables}
\Crefname{table}{Table}{Tables}
\newtheorem{corollary}{Corollary}
\declaretheorem[name=Lemma,numberwithin=section]{lemma}
\begin{document}

\runningauthor{Ayoub Ghriss, Claire Monteleoni}

\twocolumn[
    \aistatstitle{Deep Clustering via Probabilistic Ratio-Cut Optimization}
    \aistatsauthor{
        Ayoub Ghriss\textsuperscript{1}\\ayoub.ghriss@colorado.edu 
        \And 
        Claire Monteleoni\textsuperscript{1,2}\\cmontel@colorado.edu}
    \aistatsaddress{
        \textsuperscript{1}Department of Computer Science, University of Colorado Boulder \\
        \textsuperscript{2} INRIA Paris
    }
]
\begin{abstract}

	We propose a novel approach for optimizing the graph ratio-cut by modeling the binary assignments as random variables. We provide an upper bound on the expected ratio-cut, as well as an unbiased estimate of its gradient, to learn the parameters of the assignment variables in an online setting. The clustering resulting from our probabilistic approach (PRCut) outperforms the Rayleigh quotient relaxation of the combinatorial problem, its online learning extensions, and several widely used methods. We demonstrate that the PRCut clustering closely aligns with the similarity measure and can perform as well as a supervised classifier when label-based similarities are provided. This novel approach can leverage out-of-the-box self-supervised representations to achieve competitive performance and serve as an evaluation method for the quality of these representations.

\end{abstract}

\section{INTRODUCTION}

Unsupervised learning is based on the premise that labels are not necessary for the
training process, particularly for clustering tasks where samples that are highly
similar are grouped together. Various clustering
algorithms~\citep{clusteringsurvey} have been proposed in the context of machine
learning and data mining. The K-Means algorithm~\citep{lloyd} and ratio-cut
partitioning~\citep{ratiocut} were among the first approaches to address the
clustering problem. These methods were further refined and extended beyond binary
partitioning and Euclidean distances, and they were even shown to be fundamentally
equivalent in specific settings~\citep{kernelkmeans}.

The recent advances in generative models and self-supervised representation
learning have produced powerful embeddings that capture the similarity between the
original data samples. This makes leveraging these similarity measures to achieve
effective clustering more relevant than ever, as it can serve as pseudo-labels for
pre-training classifiers or eliminate the need for a decoder~\citep{decoder} in the
training of these generative models. Therefore, it is highly advantageous to
develop a method that can transform the similarity information into clustering of
equal quality. An efficient extension of spectral clustering to stochastic gradient
descent would facilitate the conversion of learned embeddings to cluster
assignments without relying on the full dataset or large batches to accurately
approximate the graph structure of the embedding space.

Several clustering methods have been applied to data streams to provide weak
signals for the pre-training of neural networks. Meanwhile, other approaches have
been proposed to utilize deep learning specifically for clustering, particularly
with autoencoders and generative neural networks~\citep{vade}. Generally, these
methods can be categorized into contrastive learning~\citep{contrastive}, where
samples are grouped based on pairwise distances in a large batch, or generative
models such as variational autoencoders that use a prior with an auxiliary variable
as the cluster assignment. The former methods do not fully capture the global
structure of the clusters, while the latter may suffer from overfitting the prior
due to the complexity of the neural network, with limited options to prevent this
without relying on labels. Spectral clustering avoids these issues by clustering
the data based on global similarities between clusters rather than just pairs of
samples.

Spectral clustering has long resisted attempts to extend it to parametric learning,
primarily due to the challenge of handling the spectral decomposition of large
matrices. Since it is based on the spectral decomposition of the relaxed ratio-cut
rather than the combinatorial version, it requires the projection of the data into
a principal eigenspace. A clustering algorithm is then applied to these
projections, further making the final performance dependent on the chosen
clustering algorithm.

In this paper, we develop a novel method to optimize the ratio-cut without relying
on the spectral decomposition of the Laplacian matrix by employing a probabilistic
approach that treats the cluster assignment as random variables. Furthermore, we
utilize neural networks to parameterize the assignment probabilities and address
the clustering problem in an online manner using stochastic gradient descent. The
result is an online learning algorithm that achieves a better ratio-cut objective
than the memory-intensive spectral method applied to the full graph Laplacian. We
also demonstrate that our approach achieves comparable performance to a supervised
classifier when utilizing supervised similarity (i.e., two samples are considered
similar if they share the same label). This showcases a high fidelity between the
resulting clustering and the given similarity measure. The proposed algorithm can
also serve as a drop-in replacement for any other clustering method used in the
pre-training of text or speech transformers, which opens up several opportunities
for enhancing the effectiveness of pre-training for downstream tasks.

\section{BACKGROUND}
In this section, we succinctly present the relevant elements related to the notion
of graph ratio-cut. The curious reader may refer to \citet{spectralclustering} for
a more detailed account of spectral clustering and other types of graph cuts.

Let $\gG=(\gV,\gE,\gK)$ be an undirected weighted graph where $\gV \defeq \lrcb{v_i
\mid 1 \leq i \leq n} \subset \mathbb{R}^p$ is the set of vertices, $\gE \subset
\gV \times \gV$ is the set of edges $e_{ij}$ linking vertices $v_i$ and $v_j$, and
$\gK: \gV \times \gV \to \R^+$ is a symmetric non-negative kernel. Let $\mW$ be the
symmetric $n \times n$ adjacency matrix where $\emW_{ij} = \gK(v_i, v_j)$. The
degree of the vertex $v_i$ is $d_i=\sum_j \emW_{ij}$, and the degree matrix
$\mD\defeq\diag{(d_1,\ldots,d_n)}$.

Let $k \geq 2$ and $\gC_k=\lrcb{\sC_\ell|1\leq\ell\leq k}$ a partitioning of the
graph $\gG$ into $k$ disjoint clusters. We shall represent the subset $\sC_\ell
\subset \gV$ using the binary assignment vector $\1{\sC_\ell} \in \lrcb{0,1}^n$
where $\1{\sC_\ell}(i)=1$ if and only if $v_i \in \sC_\ell$. The size of $\sC_\ell$
is measured using its cardinality $|\sC_\ell| = \sum_{i=1}^n \1{\sC_\ell}(i)$, and
we denote by $\vf^{(\ell)} \defeq \by{\sqrt{|\sC_\ell|}} \1{\sC_\ell}$ its
\textit{ratio assignment} when $|\sC_\ell| >0$.

The \textit{ratio-cut} for $\gC_k$ is defined as:
\begin{align} \label{eq:ratiocut}
	\rcut(\gC_k) & \defeq \by{2}\sum_{\ell=1}^{k} \by{|\sC_\ell|}
	\sum_{i,j\in \sC_\ell\times \comp{\sC_\ell}} \emW_{ij}        \\
	             & = \by{2}\sum_{\ell=1}^{k} \by{|\sC_\ell|}
	\1{\sC_\ell}^\top \mW \lrp{\bm{1}_n-\1{\sC_\ell}},
\end{align}
where $\comp{\sA}$ denotes $\gV\backslash \sA$, the complement of $\sA$ in $\gV$.

We define the unnormalized Laplacian matrix as $\mL_{un}\defeq \mD-\mW$, which can
be used to express the ratio-cut of $\gC_k$ as:
\begin{align}
	\label{eq:rawlap}
	\rcut(\gC_k)
	=\by{2}\Tr\lrb{\mF_{\gC_k}^\top \mL_{un} \mF_{\gC_k}},
\end{align}
where $\mF_{\gC_k} \defeq \lrb{\vf^{(1)},\ldots \vf^{(k)}}\in \R^{n \times k}$ is the
ratio assignments matrix.

Since the clusters should be disjoint and different from the naive partitioning
$\lrcb{\gV,\emptyset,\ldots,\emptyset}$, we can express these constraints as
$\mF_{:,\ell}^\top\1{V}\neq 1$ for all $1\leq \ell\leq k$.

The optimization of the ratio-cut is then equivalent to minimizing a Rayleigh
quotient on $\{0,1\}^{n\times k}$. Solving~\cref{eq:rayleighquo} on the set
$\{0,1\}^n$ is generally an NP-hard problem. The optimization is hence relaxed so
that the unknown vectors are in the unit sphere $S^{(n)} = \lrcb{x \in \R^n,
\normtwo{x}^2 =1}$. The objective is then:

\begin{align}
	\label{eq:rayleighquo}
	\begin{aligned}
		 & \underset{\mF}{\text{minimize}} &  & \Tr(\mF^\top \mL \mF )                                      \\
		 & \text{subject to}               &  & \mF^\top \mF = \mI_k \text{ and } \mF_{:,j}^\top\1{V}\neq 1
	\end{aligned}
\end{align}

The minimization of the Rayleigh quotient under the outlined constraints yields the
$k$ smoothest eigenvectors of the Laplacian matrix (excluding the trivial first
eigenvector $\1{V}$). Vertices within the same cluster are anticipated to have
similar projections onto the solutions of the relaxed problem. As we ascend the
Laplacian spectrum, the projections onto the eigenvectors encapsulate more specific
(higher frequency) features. Subsequently, the binary assignments are determined
through $k$-means clustering~\citep{standardspectral} of the relaxed problem's
solution.

In this paper, we introduce a novel approach to optimizing~\cref{eq:rayleighquo}
that circumvents the necessity for spectral decomposition of extensive matrices or
kernels, thereby sidestepping the relaxation of the problem into Euclidean space.
Rather, we propose to relax the problem into an optimization over a simplex through
the parameterization of the cluster assignment distribution.

\section{RELATED WORK}\label{sec:related}
There have been many attempts to extend Spectral Clustering to streaming
settings~\citep{streamclustering}, infinitely countable datasets, or continuous
input spaces. For instance, \citet{streamspec} proposed an extension to
non-parametric Spectral Clustering for data streams, particularly when the kernel
similarity is bilinear or can be approximated linearly. With the arrival of each
new stream batch, a batch-based Singular Value Decomposition (SVD) is used to embed
the stream and realign a facility set consisting of anchor points. These anchor
points are then utilized to run a batch $K$-means algorithm~\citep{fastkmeans} to
cluster the new points.

However, research on extending Spectral Clustering to the domain of parametric
learning, where the optimization variable is a mapping of the input space to the
eigenspace, is limited. This is primarily due to the sensitivity of the
eigendecomposition~\citep{sensitiveigen} and the proven difficulty of
eigendecomposition in non-parametric settings when the similarity kernel takes
specific forms~\citep{sparsenystrom}.

One of the early attempts in this direction was made in the context of Deep
Reinforcement Learning~\citep{deeprloptions}, where the Markov Decision Process
(MDP) is decomposed into sub-tasks that navigate the representation
space~\citep{eigenoption}. These sub-tasks, also called
\textit{options}~\citep{optiondiscover}, correspond to eigenvectors associated with
the largest eigenvalues of the graph Laplacian, where the vertices represent
elements of the state space and the similarity is the likelihood of the agent
moving from one state to another. Extending such an approach to a clustering
setting would require constructing an MDP for which the correspondence between
eigenvectors and options holds.

SpectralNets~\citep{spectralnet} were perhaps the first attempt at explicitly
training a neural network to approximate eigenvectors of the Laplacian for
large-scale clustering. The learned mapping is a neural network $N_\theta : \R^p
\rightarrow \R^k$, parameterized by $\theta$, that maps the input vertices to their
projection on the subspace spanned by the $k$-smoothest eigenvectors of the
Laplacian. The network is constructed to ensure that the output features are
approximately orthonormal:
\[
	\by{n}\sum_{i=1}^{n} {N_\theta(v_i)}^\top {N_\theta(v_i)} \approx I_k
\]
We denote by $\hat{N}_\theta$ the neural network without the last
  orthogonalization layer, which is based on the Cholesky decomposition of the
  estimated covariance matrix $\mSigma_\theta \in \R^{k \times k}$:
\[
	\mSigma_\theta = \by{n}\sum_{i=1}^{n} {\hat{N}_\theta(v_i)}^\top {\hat{N}_\theta(v_i)}.
\]
SpectralNets approach is then based on the minimization of:
\begin{align}
	\label{eq:spectralneloss}
	\gL_{spectral}(\theta)\defeq
	\sum_{i,j=1}^{n} \emW_{ij} \normtwo{N_\theta(v_i) -{N_\theta(v_j)}}^2
\end{align}

The matrix $\mPi_\theta \in \sR^{k \times k}$ serves as the parametric counterpart
of $\mF^\top \mL \mF$ derived from \cref{eq:rayleighquo}. Training the neural
network equates to an alternating optimization scheme, where one batch is utilized
to estimate $\mSigma_\theta$, while the other is employed to minimize the loss
$\gL_{spectral}(\theta)$.

During training, should the input exhibit noise, the SpectralNet's output may
converge to a constant mapping, resulting in the spectral loss approaching zero.
Consequently, this approach encounters numerical instability in high-dimensional
data scenarios. One common strategy to circumvent this involves offsetting the
matrix $\mSigma_\theta$ by a scaled identity and establishing an appropriate
stopping criterion. However, the resulting algorithm remains highly susceptible to
input and similarity noise. This sensitivity becomes particularly evident when
benchmarked against non-parametric Spectral Clustering. SpectralNets achieve
competitive performance only when the input consists of the code space from another
well-performing pre-trained variational encoder~\citep{vade}.

\textit{Spectral Inference Networks} (Spin)~\citep{spin} adopt a distinct strategy
for parametric optimization of the Rayleigh quotient. As defined previously, the
loss function takes the form:

\begin{align*}
	\gL_{spin}(\theta) = \Tr\lrp{{\mSigma_\theta}^{-1}\mPi_\theta}
\end{align*}

While primarily utilized within the domain of Deep Reinforcement Learning,
targeting the eigenfunctions associated with the largest eigenvalues, this approach
can be adapted for loss minimization rather than maximization.

Upon differentiation~\citep{matrixcook}, the full gradient can be expressed as:
\begin{align*}
	\Tr\lrp{{\mSigma_\theta}^{-1}\nabla_\theta\mPi_\theta} -
	\Tr\lrp{{\mSigma_\theta}^{-1}(\nabla_\theta{\mSigma_\theta}){\mSigma_\theta}^{-1}\mPi_\theta}
\end{align*}

The Spin algorithm estimates $\mSigma_\theta$ and $\nabla_\theta \mSigma_\theta$
utilizing moving averages. It addresses the issue of overlapping eigenvectors by
modifying the gradient to ensure sequential independence in updates: the gradients
of the first $l$ eigenfunctions are made independent of gradients from other
components ${l+1,\ldots, k}$. This alteration significantly slows down training, in
addition to the computational burden of computing $\nabla_\theta\mSigma_\theta$,
which necessitates calculating $k^2$ backward passes and storing $k^2 \times
\text{size}(\theta)$, thereby increasing memory and computational costs.

We assert that the performance of these attempts is ultimately hindered by the
difficulty of the eigendecomposition of large kernels. Therefore, we take a
different approach to optimizing the graph ratio-cut that circumvents the spectral
decomposition.

\section{PROPOSED METHOD}
In our probabilistic approach, the minimization of the ratio-cut is addressed
differently. Instead of the deterministic assignments $\vone_{\sC_\ell}$, we use
the random assignment vector $\rva^{(\ell)} \in \lrcb{0,1}^n$ under the assumption
that $(\rva_i^{(\ell)})_i$ are independent random variables such that:

\[
	\Prob{v_i \in \sC_\ell} = \Prob{\rva^{(\ell)}_i = 1} \defeq  \emP_{i,\ell},
\]
where the rows of the matrix $\mP\in[0,1]^{n\times k}$ sum to $1$:
  $\sum_{\ell=1}^k \emP_{i,\ell} = 1$ for all $i\in\lrcb{1,\ldots,n}$.

The random assignment for each vertex $i$ follows a categorical distribution of
parameter $\mP_{i,:}$ and the random clustering $\gC_k$ is thus parameterized by
$\mP\in[0,1]^{n\times k}$. We define the random ratio-assignment vector
$\rvf^{(\ell)}$ for cluster $\sC_\ell$ as:
\[
	\rvf^{(\ell)} = \by{\sqrt{\widehat{|\sC_\ell|}}}\rva^{(\ell)},
\]
where $\widehat{|\sC_\ell|} = \sum_{i=1}^{n}\rva^{(\ell)}_i$, and all the
  elements of $\rvf^{(\ell)}$ are in the interval $\lrb{0,1}$ with the convention
  $\frac{0}{0} = 1$.

The stochastic counterpart of the quantity introduced earlier in \Cref{eq:ratiocut}
is:

\begin{align}
	\label{eq:stochasticrcut}
	\widehat{\rcut(\gC_k)}
	 & = \by{2}\sum_{\ell=1}^k \sum_{1 \leq i,j\leq n} \emW_{ij} (\rvf^{(\ell)}_i-\rvf^{(\ell)}_j)^2.
\end{align}

\Cref{eq:rawlap} can also be extended to the stochastic setting similarly.

The next step is to compute the expected ratio-cut for a clustering $\gC_k$
parameterized by $\mP$. Without loss of generality, we only have to compute
$\Ea{(\rvf^{(\ell)}_1-\rvf^{(\ell)}_2)^2}$ as a function of $\mP$ due to the
linearity of the expectation. We prove the following result
in~\Cref{appendix:diffexpect}:

\begin{restatable}[Difference expectation]{lemma}{diffexpect}
	\label{lemma:diffexpect}
	Let $\sC$ be a subset of $\gV$ and $\rva$ its random assignment vector
	parameterized by $\vp\in[0,1]^n$. Let $\rvf$ be its random ratio-assignment
	vector. Then we have the following:
	\begin{align*}
		\Ea{(\rvf_1-\rvf_2)^2} = (\evp_1 + \evp_2 -2 \evp_1 \evp_2)\Ea{\by{1 + \widehat{|\sC^{\perp(1,2)}|}}},
	\end{align*}
	where $\sC^{\perp(i,j)} = \sC \backslash \lrcb{v_i, v_j}$
\end{restatable}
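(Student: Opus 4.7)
The plan is to reduce the computation to a conditioning argument on $(\rva_1, \rva_2)$, exploiting the independence of these two variables from the rest of the assignments.

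First I would rewrite the random ratio-assignment difference as
\[
(\rvf_1 - \rvf_2)^2 = \frac{(\rva_1 - \rva_2)^2}{\widehat{|\sC|}},
\]
which is valid on the event $\widehat{|\sC|} > 0$; on the complementary event $\widehat{|\sC|}=0$ all $\rva_i$ vanish so $(\rva_1-\rva_2)^2=0$ as well, and the $\frac{0}{0}=1$ convention makes $(\rvf_1-\rvf_2)^2 = 0$ consistent with the zero numerator. Next, since $\rva_1, \rva_2 \in \{0,1\}$, I would use the identity $(\rva_1 - \rva_2)^2 = \rva_1 + \rva_2 - 2\rva_1\rva_2$, which equals $1$ exactly when $\rva_1 \neq \rva_2$ and $0$ otherwise.

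The central observation is the decomposition $\widehat{|\sC|} = \rva_1 + \rva_2 + \widehat{|\sC^{\perp(1,2)}|}$, where the tail sum $\widehat{|\sC^{\perp(1,2)}|} = \sum_{i \geq 3} \rva_i$ is independent of $(\rva_1,\rva_2)$ by the independence assumption on the $\rva_i$'s. Conditioning on $(\rva_1, \rva_2)$, the numerator $(\rva_1-\rva_2)^2$ is nonzero only on the two disjoint events $\{\rva_1=1,\rva_2=0\}$ and $\{\rva_1=0,\rva_2=1\}$, and on both of these events we have $\rva_1+\rva_2 = 1$, so $\widehat{|\sC|} = 1 + \widehat{|\sC^{\perp(1,2)}|}$. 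Therefore
\[
\Ea{(\rvf_1-\rvf_2)^2} = \Prob{\rva_1 \neq \rva_2}\cdot \Ea{\frac{1}{1+\widehat{|\sC^{\perp(1,2)}|}}},
\]
where the factorisation uses independence of $(\rva_1,\rva_2)$ from $\widehat{|\sC^{\perp(1,2)}|}$.

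Finally I would compute $\Prob{\rva_1 \neq \rva_2} = \evp_1(1-\evp_2) + (1-\evp_1)\evp_2 = \evp_1 + \evp_2 - 2\evp_1\evp_2$ to recover the claimed formula. The only genuinely delicate step is justifying the exchange of expectation and division by the random normaliser $\widehat{|\sC|}$; I expect this to be handled cleanly by the conditioning, since the problematic event $\widehat{|\sC|}=0$ contributes nothing (the indicator $(\rva_1-\rva_2)^2$ vanishes there), so no ambiguity from the $\tfrac{0}{0}$ convention actually enters the final integral.
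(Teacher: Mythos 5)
Your proof is correct and follows essentially the same route as the paper's: decompose $\widehat{|\sC|} = \rva_1 + \rva_2 + \widehat{|\sC^{\perp(1,2)}|}$, observe that $(\rva_1-\rva_2)^2$ is supported on the two events where exactly one of $\rva_1,\rva_2$ equals $1$ (on which the denominator becomes $1+\widehat{|\sC^{\perp(1,2)}|}$), factor by independence, and compute $\Prob{\rva_1\neq\rva_2}=\evp_1+\evp_2-2\evp_1\evp_2$. Your explicit handling of the $\widehat{|\sC|}=0$ event and the $\tfrac{0}{0}$ convention is a small added care the paper leaves implicit.
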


To avoid excluding pairs $(i,i)_{1\leq i \leq n }$ from the summation in
~\Cref{eq:stochasticrcut} each time, we will henceforth assume that $\emW_{ii} = 0$
for all $i \in \lrcb{1,\ldots,n}$.

The random variable $\widehat{|\sC^{\perp(1,2)}|} = \sum_{i=3}^n \rva_i$ is the sum
of (n-2) independent (but not identical) Bernoulli random variables, also known as
a Poisson binomial. It coincides with the binomial distribution when the Bernoulli
random variables share the same parameter. See \Cref{appendix:poissonexpect} for a
compilation of properties of this distribution including the proof
of~\Cref{lemma:poissonexpect}.

\begin{restatable}[Poisson binomial expectation]{lemma}{poissonexpect}
	\label{lemma:poissonexpect}
	Let $\rZ$ be a Poisson binomial random variable of parameter
	$\bm{\alpha}=(\alpha_1,\ldots,\alpha_m)\in[0,1]^m$. Then we have:
	\[
		\Ea{\by{1 + \rZ}}  = \int_{0}^{1} \prod_{i=1}^{m} (1-\alpha_i t)dt
	\]
\end{restatable}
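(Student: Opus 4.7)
The plan is to exploit the standard integral representation
\[
\frac{1}{1+k} = \int_0^1 t^k\, dt \qquad (k \in \mathbb{Z}_{\geq 0}),
\]
which is the key trick for converting the harmonic moment $\E[1/(1+\rZ)]$ into a probability generating function. Since $\rZ$ takes only non-negative integer values, applying this pointwise and then using Fubini/Tonelli (everything is non-negative on $[0,1]$) gives
\[
\E\!\left[\frac{1}{1+\rZ}\right] = \E\!\left[\int_0^1 t^{\rZ}\, dt\right] = \int_0^1 \E[t^{\rZ}]\, dt.
\]

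Next I would compute the probability generating function $\E[t^{\rZ}]$. Write $\rZ = \sum_{i=1}^m \rX_i$ with the $\rX_i$ independent Bernoulli$(\alpha_i)$. Independence factorizes $t^{\rZ} = \prod_i t^{\rX_i}$ inside the expectation, and for each factor
\[
\E[t^{\rX_i}] = (1-\alpha_i) + \alpha_i t = 1 - \alpha_i(1-t).
\]
Substituting back yields
\[
\E\!\left[\frac{1}{1+\rZ}\right] = \int_0^1 \prod_{i=1}^m \bigl(1 - \alpha_i(1-t)\bigr)\, dt.
\]

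Finally, the change of variables $s = 1 - t$ (with $ds = -dt$ and limits swapping from $t\in[0,1]$ to $s\in[1,0]$) absorbs the $(1-t)$ inside each factor and produces the stated form
\[
\E\!\left[\frac{1}{1+\rZ}\right] = \int_0^1 \prod_{i=1}^m (1 - \alpha_i s)\, ds.
\]

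There is no real obstacle here: the only thing to justify with any care is the swap of expectation and integral, which is immediate since the integrand $t^{\rZ} \in [0,1]$ is bounded and non-negative, so Tonelli (or dominated convergence) applies without fuss. Everything else is bookkeeping, and the proof does not require any structural assumption on the $\alpha_i$ beyond $\alpha_i \in [0,1]$, which is already built into the definition of a Poisson binomial.
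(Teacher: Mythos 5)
Your proof is correct and follows essentially the same route as the paper's: the integral representation $\frac{1}{1+k}=\int_0^1 t^k\,dt$, exchange of expectation and integral justified by non-negativity and boundedness, factorization of the probability generating function via independence, and the change of variables $t \mapsto 1-t$. No gaps.
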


We can now express the expected ratio-cut by combining the results from
\Cref{lemma:diffexpect} and \Cref{lemma:poissonexpect}.

\begin{restatable}[Ratio-cut expectation]{thm}{ratiocutexp}
	\label{thm:ratiocutexp}
	The expected ratio-cut of the random clustering $\gC_k$ parameterized by
	$\mP\in[0,1]^{n\times k}$ can be computed as $\Ea{ \widehat{\rcut(\gC_k)} } =
		\sum_{\ell=1}^k \RC(\mP_{:,\ell}) $ where:
	\begin{align*}
		\RC(\vp)       & = \by{2} \sum_{i,j=1}^n \emW_{ij}\lrp{\evp_i + \evp_j -2\evp_i\evp_j} \sI(\evp, i, j) \\
		\sI(\vp, i, j) & \defeq \int_{0}^{1} \prod_{m\neq i,j}^{n} (1-\evp_m t)dt
	\end{align*}
\end{restatable}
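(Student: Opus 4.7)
The plan is to combine the two preceding lemmas via linearity of expectation. Starting from the definition in \Cref{eq:stochasticrcut}, I would push the expectation inside both sums to obtain
\begin{align*}
\Ea{\widehat{\rcut(\gC_k)}} = \by{2} \sum_{\ell=1}^k \sum_{i,j=1}^n \emW_{ij}\, \Ea{(\rvf^{(\ell)}_i - \rvf^{(\ell)}_j)^2}.
\end{align*}
Because each vertex is independently assigned via a categorical distribution with parameters $\mP_{i,:}$, the marginal distribution of $(\rva^{(\ell)}_i)_{i=1}^n$ is a product of independent Bernoullis with parameters $(\emP_{i,\ell})_{i=1}^n$. This is exactly the setting of \Cref{lemma:diffexpect}, applied to the cluster $\sC_\ell$ with parameter vector $\vp = \mP_{:,\ell}$.

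Next I would invoke \Cref{lemma:diffexpect}. Although it is stated for the specific indices $1,2$, the Bernoullis are exchangeable in their labeling, so the identity
\begin{align*}
\Ea{(\rvf^{(\ell)}_i - \rvf^{(\ell)}_j)^2} = \lrp{\emP_{i,\ell} + \emP_{j,\ell} - 2\emP_{i,\ell}\emP_{j,\ell}}\, \Ea{\by{1 + \widehat{|\sC_\ell^{\perp(i,j)}|}}}
\end{align*}
holds for every pair $(i,j)$ with $i\neq j$. The diagonal terms $i=j$ can be harmlessly included in the sum thanks to the convention $\emW_{ii}=0$ stated just after \Cref{lemma:diffexpect}. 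The random variable $\widehat{|\sC_\ell^{\perp(i,j)}|} = \sum_{m\neq i,j} \rva^{(\ell)}_m$ is a Poisson binomial with parameter vector $(\emP_{m,\ell})_{m\neq i,j}$, so \Cref{lemma:poissonexpect} directly gives
\begin{align*}
\Ea{\by{1 + \widehat{|\sC_\ell^{\perp(i,j)}|}}} = \int_0^1 \prod_{m\neq i,j}(1-\emP_{m,\ell}\, t)\, dt = \sI(\mP_{:,\ell}, i, j).
\end{align*}

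Substituting these two expressions into the double sum and grouping the terms cluster by cluster yields precisely $\Ea{\widehat{\rcut(\gC_k)}} = \sum_{\ell=1}^k \RC(\mP_{:,\ell})$, which is the claim. There is no real obstacle here: the theorem is a bookkeeping chain linking the two lemmas. The only points I would verify carefully are (i) that the marginal per-cluster distribution of $(\rva^{(\ell)}_i)_i$ is truly an independent product of Bernoullis so that \Cref{lemma:diffexpect} applies, and (ii) that the convention $\emW_{ii}=0$ and the relabeling of indices in \Cref{lemma:diffexpect} together let us sum over all $(i,j)\in\{1,\ldots,n\}^2$ without special-casing the diagonal or any particular pair.
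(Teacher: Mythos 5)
Your proposal is correct and follows exactly the route the paper intends: the paper offers no separate proof of this theorem beyond the remark that it follows "by combining the results from Lemma~\ref{lemma:diffexpect} and Lemma~\ref{lemma:poissonexpect}", and your write-up is precisely that combination — linearity of expectation over the double sum, Lemma~\ref{lemma:diffexpect} applied to each pair $(i,j)$ by relabeling, and Lemma~\ref{lemma:poissonexpect} identifying $\Ea{1/(1+\widehat{|\sC_\ell^{\perp(i,j)}|})}$ with $\sI(\mP_{:,\ell},i,j)$. Your two verification points (independence of the per-cluster Bernoulli marginals, and the $\emW_{ii}=0$ convention for the diagonal) are exactly the details the paper leaves implicit, so nothing is missing.
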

The expression for $\RC(\vp)$ can be further simplified by consolidating the terms linear in $\vp$. However, we retain the current formulation as it will prove useful when sampling random pairs $(i, j)$ in the implementation.

We will drop the cluster index $\ell$ in the next sections and use $\vp \in
[0,1]^n$ as the parameter of the random assignment. The goal now is to optimize
$\RC(\vp)$.

\subsection{Computation of the expected ratio-cut}\label{subsec:computeall}
A straightforward method to estimate the expected ratio-cut from
\Cref{thm:ratiocutexp} involves discretizing the interval $[0,1]$ to approximate
the integral. By using a uniform discretization with a step size of $\by{T}$ for
$T>0$, we can apply the formula:
\[
	\sI(\vp, i, j) = \by{T} \sum_{t=1}^{T} \prod_{m\neq i,j} (1-\evp_i \frac{t}{T})
\]
The quality of the approximation depends on $T$, but since the integrated
  function is polynomial in $t$, we have the luxury of using a weighted
  discretization scheme that ensures the exact computation of the integral using
  the following quadrature method proven in~\Cref{appendix:integralcompute}.

\begin{restatable}[Integral computation via quadrature]{lemma}{integralcompute}
	\label{lemma:integralcompute}
	Let $m>0$ be an integer and $c_m \defeq \lfloor\frac{m}{2}\rfloor + 1$ (the
	smallest integer such that $2c_m\geq m+1$), then there exist $c_m$ tuples
	$(s_q,t_q)_{1\leq q \leq c_m}$ such that:
	\[
		\int_{0}^{1} \prod_{i=1}^{m} (1-\evp_i t)dt = \sum_{q=1}^{c_m} s_q \prod_{i=1}^{m} (1-\evp_i t_q),
	\]
	for all $\vp\in [0,1]^m$.
\end{restatable}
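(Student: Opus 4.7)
The key observation is that for any fixed $\vp \in [0,1]^m$, the integrand $f(t) \defeq \prod_{i=1}^{m}(1 - \evp_i t)$ is a polynomial in $t$ of degree at most $m$. Therefore, computing the integral $\int_0^1 f(t)\,dt$ amounts to evaluating a linear functional on the vector space of polynomials of degree at most $m$, and any quadrature rule that is exact on this space will evaluate it exactly. The problem thus reduces to showing that there exists a quadrature rule with $c_m$ nodes on $[0,1]$, with nodes and weights \emph{independent of} $\vp$, that is exact for polynomials of degree at most $m$.

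The plan is to invoke Gauss--Legendre quadrature (shifted to the interval $[0,1]$). It is a classical fact that the $c_m$-point Gauss--Legendre rule is exact on polynomials of degree at most $2c_m - 1$. By the definition $c_m = \lfloor m/2 \rfloor + 1$, we have $2 c_m \geq m + 1$, i.e.\ $2c_m - 1 \geq m$, so the rule is exact for every polynomial of degree at most $m$, in particular for $f$. Letting $(t_q, s_q)_{1 \leq q \leq c_m}$ denote the nodes and weights of this rule (which depend only on $m$, not on $\vp$), we obtain
\begin{align*}
\int_0^1 \prod_{i=1}^m (1 - \evp_i t)\, dt \;=\; \sum_{q=1}^{c_m} s_q \prod_{i=1}^m (1 - \evp_i t_q)
\end{align*}
for every $\vp \in [0,1]^m$, which is the claimed identity.

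The only thing to verify carefully is the claim about the degree of exactness of Gauss--Legendre; if one prefers to avoid quoting it as a black box, the same conclusion follows from a direct construction. Interpret the equality in the statement as a linear system: both sides are linear functionals of the $m+1$ coefficients of $f$ viewed as a polynomial in $t$, and the right-hand side depends on $2c_m$ free parameters $(s_q, t_q)$. Since $2c_m \geq m+1$, one can set up and solve a (generally non-linear) system matching the first $m+1$ moments $\int_0^1 t^k\,dt = 1/(k+1)$ by $\sum_q s_q t_q^k$ for $k = 0, \ldots, m$; existence of a solution follows from the standard theory of Gaussian quadrature via orthogonal (Legendre) polynomials on $[0,1]$.

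The main obstacle, if any, is merely the bookkeeping of the bound $2c_m - 1 \geq m$ and making explicit that the nodes and weights do not depend on $\vp$; both issues are handled by choosing the quadrature rule based solely on the degree $m$ of the integrand. No computation over the specific structure of $\prod_i (1 - \evp_i t)$ is needed beyond observing that this product is a polynomial of degree $\leq m$ in $t$.
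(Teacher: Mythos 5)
Your proof is correct and takes essentially the same route as the paper: both observe that the integrand is a polynomial of degree at most $m$ in $t$ and invoke the $c_m$-point Gauss--Legendre rule (shifted to $[0,1]$), which is exact for degree up to $2c_m-1\geq m$, with nodes and weights depending only on $m$. The extra moment-matching remark is a fine supplement but not needed.
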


The computation of $\sI(\vp, i,j)$ for a given pair $(i,j)$ costs $O(n^2)$. We can
still obtain the total $\RC(\vp)$ in $O(c_n n^2)$ instead of $O(n^4)$ by computing
the $c_n$ quantities $\lrp{\sum_{m=1}^n\log\left(1-\evp_{m} t_q\right)}_{1\leq
q\leq c_n}$ in advance. Another challenge encountered when using the quadrature is
the instability of the computations due to the potentially large number of elements
in the product. Even with a tight approximation using batch-based estimates, it
still costs $O(b^3)$, where $b$ is the batch size. Furthermore, the higher the
number of clusters, the larger the batch size should be to obtain a good estimate
of the integral for different clusters. These challenges are to be expected since
the original combinatorial problem is generally NP-hard. See
\Cref{appendix:batchestimate} for a detailed analysis of the batch-based estimate.
Instead, we prove in~\Cref{appendix:integralupperbound} that the expected ratio cut
can be upper-bounded by a much more accessible quantity, as shown
in~\Cref{lemma:integralupperbound}.

\begin{restatable}[Integral upper-bound]{lemma}{integralupperbound}
	\label{lemma:integralupperbound}
	We adopt the same notation of \Cref{lemma:poissonexpect} where
	$\bm{\alpha}=(\alpha_1,\ldots,\alpha_m)\in[0,1]^m$, and we assume that
	$\ov{\bm{\alpha}}\defeq\by{m}\sum_{i=1}^m \alpha_i > 0$. Then we have:
	\[
		\int_0^1 \prod_{i=1}^{m}(1-\alpha_i t)dt \leq \by{(m+1)\ov{\bm{\alpha}}}.
	\]
\end{restatable}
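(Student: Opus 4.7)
The plan is to prove the bound in two steps: apply the AM--GM inequality pointwise inside the integral to collapse the product into a single-variable expression in $\ov{\bm{\alpha}}$, and then integrate that simpler expression in closed form.

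First I would note that for every $t \in [0,1]$ and every $i$, the factor $1 - \alpha_i t$ lies in $[0,1]$, so all factors in the product are non-negative. The AM--GM inequality then gives, for each fixed $t$,
\begin{align*}
\prod_{i=1}^{m} (1-\alpha_i t) \;\leq\; \left(\by{m}\sum_{i=1}^{m}(1-\alpha_i t)\right)^{\!m} = (1-\ov{\bm{\alpha}}\, t)^{m}.
\end{align*}
This step is where the uniform average $\ov{\bm{\alpha}}$ is introduced, and it is the only place where the structural symmetrization occurs. Because both sides are continuous in $t$, integrating preserves the inequality.

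Next I would evaluate $\int_0^1 (1-\ov{\bm{\alpha}}\, t)^m\, dt$ by a direct antiderivative. Using the assumption $\ov{\bm{\alpha}} > 0$, the substitution $u = 1 - \ov{\bm{\alpha}} t$ gives
\begin{align*}
\int_0^1 (1 - \ov{\bm{\alpha}}\, t)^{m}\, dt \;=\; \frac{1 - (1-\ov{\bm{\alpha}})^{m+1}}{(m+1)\,\ov{\bm{\alpha}}} \;\leq\; \by{(m+1)\,\ov{\bm{\alpha}}},
\end{align*}
where the last inequality discards the non-negative term $(1-\ov{\bm{\alpha}})^{m+1} \geq 0$, which is valid since $\ov{\bm{\alpha}} \in [0,1]$. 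Chaining this with the AM--GM step yields the claimed bound.

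Honestly, there is no serious obstacle here: the only thing to be careful about is verifying the sign conditions that justify AM--GM (all factors non-negative) and the division by $\ov{\bm{\alpha}}$ (which is why the hypothesis $\ov{\bm{\alpha}} > 0$ is needed); everything else reduces to a one-line integral. If one wished to, one could also note the bound is tight up to constants in the regime $\ov{\bm{\alpha}} \to 0^+$, which explains why this simple inequality suffices for the subsequent algorithmic use, even though AM--GM loses information whenever the $\alpha_i$ are far from equal.
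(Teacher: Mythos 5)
Your proof is correct and takes essentially the same route as the paper: the paper's step ``exponential is convex'' applied to $\exp\lrb{\by{m}\sum_i\log(1-\alpha_i t)}$ is precisely the AM--GM inequality you invoke directly, and both proofs then integrate $(1-\ov{\bm{\alpha}}t)^m$ and drop the non-negative term $(1-\ov{\bm{\alpha}})^{m+1}$. If anything, your direct appeal to AM--GM for non-negative factors is marginally cleaner, since it avoids the degenerate case $\alpha_i t = 1$ where the paper's logarithms are undefined.
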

The assumption $\ov{\bm{\alpha}} > 0$ is not necessary if we extend the
property to $\R\cup\lrcb{+\infty}$. We can now use \Cref{lemma:integralupperbound}
to upper-bound $\RC(\vp)$.

\begin{restatable}[$\RC$ upper-bound]{lemma}{rcupper}\label{lemma:rcupper}
	Using the notations of \Cref{thm:ratiocutexp} and
	\Cref{lemma:integralupperbound}, we have:
	\[
		\RC(\vp) \leq \frac{e^2}{2n}\by{\ov{\vp}} \sum_{1 \leq i,j\leq n}
		w_{ij}(p_i + p_j  -2 p_i p_j)
	\]
\end{restatable}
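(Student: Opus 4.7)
The plan is to establish the uniform bound
$$\sI(\vp, i, j) \leq \frac{e^2}{n\ov{\vp}}$$
for every pair of distinct indices $i, j$, and then plug it into the expression $\RC(\vp) = \by{2}\sum_{i,j} \emW_{ij}(p_i + p_j - 2 p_i p_j)\, \sI(\vp, i, j)$ from \Cref{thm:ratiocutexp}. Factoring the constant $e^2/(n\ov{\vp})$ out of the double sum yields the stated inequality directly, so the entire proof reduces to the one-line uniform bound on $\sI$.

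To prove that uniform bound, I would avoid going through \Cref{lemma:integralupperbound} and instead return to the elementary inequality $1 - x \leq e^{-x}$ (valid on $[0,1]$). Applying it to each factor of the integrand and writing $S \defeq \sum_{m=1}^n p_m = n\ov{\vp}$ gives
$$\prod_{m \neq i, j}(1 - p_m t) \;\leq\; \exp\!\lrp{-t \sum_{m \neq i, j} p_m} \;=\; e^{-tS}\, e^{t(p_i + p_j)}.$$
Since $t \in [0,1]$ and $p_i, p_j \in [0,1]$, we have $t(p_i + p_j) \leq 2$, so $e^{t(p_i + p_j)} \leq e^2$ uniformly. Integrating in $t$ then gives
$$\sI(\vp, i, j) \;\leq\; e^2 \int_0^1 e^{-tS}\, dt \;=\; e^2 \cdot \frac{1 - e^{-S}}{S} \;\leq\; \frac{e^2}{n\ov{\vp}},$$
which is the sought uniform bound.

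The only subtle design choice, which I would call the main (minor) obstacle, is why one should not apply \Cref{lemma:integralupperbound} directly to the $(n-2)$-vector $(p_m)_{m \neq i, j}$. Doing so yields $\sI(\vp, i, j) \leq \frac{n-2}{(n-1)(S - p_i - p_j)}$, and converting this into a bound in terms of $1/S$ would require controlling $S/(S - p_i - p_j)$, which is not uniformly bounded (it blows up when $S$ is close to $p_i + p_j$). The exponential bound above circumvents this by cleanly detaching the additive shift $p_i + p_j$ from $S$ inside the exponent, and the constant $e^2$ is exactly the price of that separation. The rest of the argument is routine substitution.
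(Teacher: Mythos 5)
Your proof is correct, but it reaches the bound by a genuinely different route than the paper. The paper's proof reuses the convexity (AM--GM) step from \Cref{lemma:integralupperbound}: it bounds $\prod_{m\neq i,j}(1-\evp_m t)$ by $\lrb{\by{n-2}\sum_{m=1}^{n}(1-\evp_m t)}^{n-2}=\lrp{\tfrac{n}{n-2}}^{n-2}(1-\ov{\vp}t)^{n-2}$ (enlarging the sum to all $n$ indices, which is exactly how it sidesteps the $S-p_i-p_j$ issue you identify), integrates the polynomial, and then controls the constant $\lrp{\tfrac{n}{n-2}}^{n-2}\tfrac{n}{n-1}$ by $e^2$ via an asymptotic argument. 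You instead apply $1-x\leq e^{-x}$ factorwise, which detaches $e^{t(p_i+p_j)}\leq e^2$ cleanly and reduces the integral to $\int_0^1 e^{-tS}dt\leq 1/S$. Your version is more elementary and arguably tighter in presentation: the constant $e^2$ appears exactly and uniformly in $n$, whereas the paper's derivation that $\lrp{\tfrac{n}{n-2}}^{n-2}\tfrac{n}{n-1}<e^2$ is stated loosely (as ``$e^2-o(n)$''), and requires $n\geq 3$ for the exponent manipulation to make sense. Both proofs need $\ov{\vp}>0$, which is inherited from \Cref{lemma:integralupperbound}. The only step you leave implicit is that each summand $\emW_{ij}(p_i+p_j-2p_ip_j)$ is non-negative (since $\emW_{ij}\geq 0$ and $p_i+p_j-2p_ip_j=p_i(1-p_j)+p_j(1-p_i)\geq 0$), which is what licenses replacing $\sI(\vp,i,j)$ by its uniform upper bound inside the sum; this is immediate but worth a clause.
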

\paragraph{Interpretation}

Before we proceed with determining an unbiased gradient estimator for the bound in
\Cref{lemma:rcupper}, we seek to understand the significance of the various
quantities introduced previously. In contrastive learning, the objective can be
written as $-2\sum_{ij} \emW_{ij} \evp_i \evp_j$, whose minimization aims to assign
highly similar samples to the same cluster. The objective in ratio-cut can be
expressed as $\sum_{ij} \emW_{ij} \lrb{\evp_i(1-\evp_j) + \evp_j(1-\evp_i)}$, which
is minimized when dissimilar samples are separated into different clusters, in
alignment with the initial goal of minimizing the cut. It is worth noting that the
minimum of the term $\emW_{ij} \lrb{\evp_i(1-\evp_j) + \evp_j(1-\evp_i)}$ is
$\emW_{ij}$, and it is achieved when either $v_i$ and $v_j$ are in different
clusters.

\Cref{lemma:poissonexpect} generalizes the
scaling by the inverse of the size of a cluster $\sC$ to the probabilistic setting.
Let's assume that $\vp \in \lrcb{0,1}^n$ (deterministic assignment), then
$(\evp_i+\evp_j-2\evp_i\evp_j)=1$ if and only if $v_i$ and $v_j$ are in different
clusters. In such case, we can easily prove the following:
$$
	\Ea{\by{1 + \widehat{|\sC|}}} = \int_{0}^{1} \prod_{i=1,\evp_i = 1}^{n} (1-t) dt = \frac{1}{1+n\ov{\vp}}.
$$
As $n$ becomes sufficiently large, the quantity $\frac{1}{n}\by{\ov{\vp}}$ from
\Cref{lemma:rcupper} approximates $\frac{1}{1+n\ov{\vp}}$ and, therefore, also
approximates $\Ea{\by{1 + \widehat{|\sC|}}}$. Consequently, the bounds we've seen
so far are tight in the deterministic case.
\subsection{Stochastic gradient of the objective}

We will drop the scaling factor $\by{2n}$ when upper bounding $\RC(\gC_k)$, we can
thus succinctly write the following upper bound for the expected ratio-cut with
$\ov{\mP} = \diag \lrp{\ov{\mP_{:,1}},\ldots,\ov{\mP_{:,k}}}$ as $\RC(\gC_k) \leq
\gL_{rc}(\mW, \mP)$ such that:
\begin{align}
	\label{eq:quad}
	\gL_{rc}(\mW, \mP)
	 & = \sum_{\ell=1}^k\sum_{i,j=1}^n
	\by{\ov{\mP_{:,\ell}}}\emW_{ij}\lrp{\emP_{i,\ell} + \emP_{j,\ell} -2 \emP_{i,\ell}\emP_{j,\ell}} \nonumber \\
	 & =\Tr(\ov{\mP}^{-1}(\vone_{n,k}-\mP)^\top\mW \mP ),
\end{align}

Let us examine the derivative of $\gL_{rc}$ of a single cluster with respect to
$\evp_i$,$ \dot{p_i}= \frac{d \gL_{prcut}}{d\evp_i} $:
\begin{align}
	\label{eq:batchgradient}
	\dot{p_i}\propto \by{\ov{p}^2} \sum_{i,j=1}^{n}\emW_{ij}
	\Big[(1 -2p_j)\ov{p}                                       
		\left. - \by{n}(p_i + p_j  -2 p_i p_j) \right],
\end{align}

In order to obtain an unbiased estimate for the gradient, we need to acquire an
accurate estimate of $\ov{\vp}$. We do this in the online setting by computing the
moving average:
\[
	{\ov\vp_t}^{ma} = (1-\beta_t){\ov\vp_{t-1}}^{ma} +\beta_t \ov{\vp}_t,
\]
such that $\beta_t = \frac{\beta}{t}$ and $\ov{\vp}_t$ is a batch-based estimate
  of $\ov{\vp}$ at time $t$. Then, the unbiased gradient can be obtained by
  back-propagating $\lrb{\stopgrad(\dot{p_i})p_i}$, with $\stopgrad$ being the
  gradient-stopping operator.

\begin{algorithm}[t]
	\caption{Probabilistic Ratio-Cut (PRCut) Algorithm}\label{alg:prcut}
	\begin{algorithmic}[1]
		\REQUIRE Dataset $(v_i)$, Similarity kernel $\gK$, batch size $b$,
		encoder $N_\theta$ parameterized by $\theta$, number of clusters $k$,$\ov{\mP}_0=\by{k}\vone_k$,$\beta>0$,
		polytope regularization weight $\gamma$, $t= 0$.
		\WHILE{not terminated}
		\STATE $t\gets t+1$
		\STATE Sample left batch $S_l$ of size $b$
		\STATE Sample right batch $S_r$ of size $b$
		\STATE Compute $\mW = \gK(S_l,S_r)$
		\STATE compute $\mP_\theta^l, \mP_\theta^r \gets N_\theta(S_l),N_\theta(S_r)\in{\sR^{b\times k}}$
		\STATE Update $\ov{\mP}_t \gets (1-\beta_t)\ov{\mP}_{t-1} +\frac{\beta_t}{2}\lrp{\ov{\mP_\theta^l} + \ov{\mP_\theta^r}}$
		\STATE Compute $\dot{\mP_\theta^l}$ and $\dot{\mP_\theta^r}$ using \Cref{eq:batchgradient}
		\STATE Back-propagate $\Tr\lrb{\mP_\theta^l \stopgrad(\dot{\mP_\theta^l})^\top +\mP_\theta^r\stopgrad(\dot{\mP_\theta^r})^\top}$
		\STATE Back-propagate $\gamma\kl{\ov{[\mP_\theta^l, \mP_\theta^r]}}{\by{k}\mI_k}$
		\STATE Use the accumulated gradients $g_t$ to update $\theta$
		\ENDWHILE
	\end{algorithmic}
\end{algorithm}
\subsection{Regularization}
As we aim to optimize $\gL_{rc}(\mW, \mP)$ using gradient descent, the main
challenge our method faces is that the assignments may collapse to a single cluster
$\sC_m$ with a probability close to 1. This translates to
$\lrp{\ov{\mP_{:,\ell}}}_{\ell\neq m}\approx 0$, which renders the gradient updates
highly unstable. \citet{sinkhorn} have adopted the constraint that $\mP$ belongs to
the polytope of distributions $\gU_k$, defined as:
\[
	\gU_k=\lrcb{P\in \sR_{+}^{n\times k} | P^\top \vone_{n} = \by{k}\vone_{k} },
\]

Instead of restricting the clusters to be equally likely through the Bregman
projection into $\gU_k$, we only encourage such behavior using Kullback-Leibler
divergence regularization. By selecting the appropriate regularization weight
$\gamma$, the additional term will ensure that the likelihood of each of the $k$
clusters, $\ov{\mP}$, exceeds a certain threshold $\delta(\gamma) > 0$ for the
optimal $\mP$. This approach does not necessarily imply that all the clusters will
be utilized, as the assignment probability for cluster $\sC_\ell$ could be
significantly above $\delta$ without any sample being more likely assigned to
cluster $\sC_\ell$. The final objective that we aim to optimize is:
\[
	\gL_{prcut}(\mW,\mP) = \gL_{rc}(\mW,\mP) + \gamma\kl{\ov{P}}{\by{k}\vone_k}
\]

\subsection{Similarity measure}
We have assumed thus far that the kernel $\gK$ is provided as input. We also assert
that the performance of any similarity-based clustering ultimately depends on the
quality of the similarity function used. The simplest kernel we can use is the
adjacency matrix for a $k$-nearest neighbors graph, where $\gK(v_i,v_j)=1$ when
either $v_i$ or $v_j$ is among the $k$ nearest neighbors of each other (to ensure
that the kernel is symmetric). We can also use the \textit{Simple} (SimCLR) or the
\textit{Symbiotic} (All4One) contrastive learning methods~\citep{simCLR,all4one} to
train a neural network to learn the pairwise similarities. Once we train our
similarity network, we use the cosine of the representations of the samples as our
similarity function to either compute $\emW_{ij}=\exp\lrp{\frac{\cosine{z_i,
z_j}}{\tau}}$ for some temperature $\tau>0$ or to build the $k$-nearest neighbors
graph based on these representations.

\subsection{Computational and Memory Footprint}
If the similarity matrix $\mW$ is dense, the time complexity of computing the PRCut objective is $O(n^3)$, identical to a vanilla spectral clustering approach. In contrast, for a sparse graph where similarity is determined based on the $m$-nearest neighbors, the time complexity of conventional spectral clustering is $O(nmk)$, not accounting for the additional cost of $O(nk^2)$ incurred by running k-means on the computed eigenvectors.

In the case of batch PRCut, the expected computational cost for evaluating the batch loss with a batch size of $b$ is $O\left(\frac{m}{n}kb^2\right)$, while the memory requirement remains constant at $O(b^2)$. Consequently, when executed for $T$ steps, the overall time complexity of stochastic PRCut becomes $O\left(T\frac{m}{n}kb^2\right)$.
%
%
%

\section{EXPERIMENTS}
We train a neural network $ N_\theta: \sR^p \mapsto \Delta^{k-1} $ that maps the
vertex $v_i$ to its cluster assignment probabilities $ \mP^{\theta}_{i} $, where $
\theta \in \R^q $ is the parameter of the network. The last layer of the neural
network is a Softmax layer, ensuring that the constraint $ \sum_{\ell=1}^k
\emP_{i\ell} = 1 $ is always satisfied. We assume that the number of class labels
$k$ is provided. When computing the batch-based gradient
using~\Cref{eq:batchgradient}, we use the factor $\by{b}$ instead of $\by{n}$,
where $ b $ is the batch size\footnote{The code to reproduce PRCut is available at \href{https://github.com/ayghri/prcut}{\texttt{https://github.com/ayghri/prcut}}}.

Since $ \gL_{rc}(\mW, \mP^\theta) $ is linear in
$\mW$, we scale it by $\frac{1}{\normtwo{\mW}_1} $ to ensure consistency in the
gradient descent method across different datasets and similarity measures.

\paragraph{Metrics} To benchmark the performance of our algorithm, we assume that we have access to the
true labeling $\vy=(y_i)_i$ and the algorithm's clustering $\vc$. We use three
different metrics defined as follows:

\begin{itemize}
	\item \textbf{Unsupervised Accuracy (ACC)}: We use the \textit{Kuhn-Munkres} algorithm~\citep{munkres} to find the
	      optimal permutation$\sigma_k$ of $\{1,\ldots,k\}$ such that $\by{n}\max_{\sigma\in
			      \sigma_k} \sum_{i=1}^{n} 1_{y_i = \sigma(c_i)}$ is maximized between the
	      true labeling $(y_i)_i$ and the clustering $(c_i)_i$.

	\item \textbf{Normalized Mutual Information (NMI)} is defined as:
	      $\text{NMI}(\vy, \vc) = \frac{\mathcal{I}(\vy, \vc)}{\max\{\mathcal{H}(\vl),\mathcal{H}(\vc)\}}$
	      Where $\mathcal{I}(\vy, \vc)$ is the mutual information between $\vy$ and $\vc$ and
	      $\gH$ is the entropy measure.
	\item \textbf{Adjusted Rand Index (ARI)} to evaluates the agreement between
	      the true class labels and the learned clustering.
	\item \textbf{Ratio Cut (RC)} is computed using the formula in~\Cref{eq:rawlap}
	      using the raw Laplacian matrix.

\end{itemize}

Before we compare our method to spectral clustering and other clustering
approaches, we first assess the quality of the clustering when using the perfect
similarity measure, where $\gK(v_i,v_j)=1$ if $v_i$ and $v_j$ share the same label,
and $\gK(v_i,v_j)=0$ otherwise. For this experiment, we employ a simple Multi-Layer
Perceptron (MLP) consisting of 3 layers with 512 hidden units, followed by Gaussian
Error Linear Units (GeLU) activations. In the PRCut network, the last layer
utilizes the Softmax activation. We compare the two methods across three datasets:
MNIST~\citep{mnist}, Fashion-MNIST (F-MNIST)~\citep{fmnist}, and
CIFAR10~\citep{cifar}. The classifier is trained by optimizing the cross-entropy
loss (CE).

\begin{table}[ht]
	\caption{Benchmarking PRCut when using label-based similarity}
	\centering
	\begin{tabular}{l@{\hspace{5mm}}l@{\hspace{5mm}}ll}
		\toprule
		{Dataset}                & {Method} & ACC            & NMI            \\
		\hline
		\multirow{2}{*}{MNIST}   & CE       & 0.980          & \textbf{0.943} \\
		                         & PRCut    & \textbf{0.987} & 0.938          \\
		\hline
		\multirow{2}{*}{F-MNIST} & CE       & 0.885          & \textbf{0.803} \\
		                         & PRCut    & \textbf{0.887} & 0.789          \\

		\hline
		\multirow{2}{*}{CIFAR10} & CE       & \textbf{0.582} & \textbf{0.369} \\
		                         & PRCut    & 0.571          & 0.359          \\
		\hline
	\end{tabular}
\end{table}

The MLP classifier trained directly on the labeled data and the PRCut clustering
using the labeled similarity demonstrate similar performance. This shows that our
approach is more versatile and can be used in various learning methods while
perfectly reflecting the quality of the similarity in the resulting clustering.

In \Cref{table:exp2} we compare our approach to vanilla Spectral Clustering (SC)
using k-neighbor graph adjacency as a similarity measure for $k=150$ using the
entire training set. Since the final performance depends on the k-means
initialization, we report the best run for SC.

\begin{table}[ht]
	\caption{Comparison between PRCut and Spectral Clustering (SC)}
	\centering
	\begin{tabular}{l@{\hspace{5mm}}l@{\hspace{5mm}}lll}
		\toprule
		{Dataset}                & {Method}  & ACC            & NMI            & RC             \\
		\hline
		\multirow{3}{*}{MNIST}   & SC (Best) & 0.70           & 0.744          & 170.1          \\
		                         & PRCut     & \textbf{0.821} & \textbf{0.778} & \textbf{150.2} \\
		\hline
		\multirow{2}{*}{F-MNIST} & SC (Best) & 0.596          & 0.593          & 110.2          \\
		                         & PRCut     & \textbf{0.658} & \textbf{0.620} & \textbf{101.5} \\
		\hline

		\multirow{2}{*}{CIFAR10} & SC (Best) & 0.217          & 0.086          & 479.3          \\
		                         & PRCut     & \textbf{0.243} & \textbf{0.121} & \textbf{440.3} \\
	\end{tabular}
	\label{table:exp2}
\end{table}

Since the final objective is to minimize the ratio cut, our approach achieves a
better ratio cut value compared to vanilla spectral clustering. Whether such
improvement translates to an enhancement in the clustering metrics depends on the
similarity function. In all the 3 datasets, our approach outperforms the spectral
relaxation of the ratio-cut.

For the third set of experiments reported in \Cref{table:exp3}, we compare our
method to VaDE~\citep{vade}, VMM~\citep{VMM}, and Turtle~\citep{turtle}. We have
observed that VaDE does not generalize well across datasets, as its performance
drastically degrades when applied to the Fashion-MNIST dataset. The VMM approach is
a more consistent variational autoencoder (VAE) with a mixture model prior,
achieving the best reported performance on the Fashion-MNIST dataset in the
literature. We also report the results for methods with the suffix "-D," which use
the pre-trained representation model DINOv2~\citep{dinov2}, while the suffix "-V"
denotes methods that utilize the representation from the CLIP~\citep{clip} vision
transformer. In both of these approaches, the k-neighbors graph and the trained
neural networks take the samples in the representation spaces as input.

The neural network consists of a single linear layer followed by the softmax operator. The learning rate was set to a constant value of $10^{-4}$ and using a $10^{-7}$ weight decay, with a bach size of $b=2048$. We set the entropy regularization to $\gamma=100.0$ and the moving average parameter to $\beta=0.8$.

\begin{table}[ht]
	\caption{Comparison of PRCut to the best-performing clustering methods.}
	\centering
	\begin{tabular}{l@{\hspace{5mm}}l@{\hspace{5mm}}ll}
		\toprule
		{Dataset}                 & {Method} & ACC            & NMI          \\
		\hline
		\multirow{3}{*}{MNIST}    & SC       & 0.701          & 0.744        \\
		                          & VaDE     & 0.857          & 0.838          \\
		                          & VMM      & \textbf{0.960} & \textbf{0.907} \\
		                          & Turtle-D & 0.573          & 0.544          \\
		                          & PRCut-V  & 0.771           & 0.734         \\
		\hline
		\multirow{3}{*}{F-MNIST}
		                          & SC       & 0.596          & 0.593          \\
		                          & VaDE     & 0.352          & 0.496          \\
		                          & VMM      & 0.712          & 0.688          \\
		                          & Turtle-D & 0.764          & 0.723          \\
		                          & PRCut-D  & \textbf{0.791} & \textbf{0.758} \\
		\hline

		\hline
		\multirow{3}{*}{CIFAR10}  & SC       & 0.217          & 0.121          \\
		                          & Turtle-V & 0.972          & 0.929          \\
		                          & PRCut-V  & \textbf{0.975} & \textbf{0.934} \\
		\hline
		\multirow{3}{*}{CIFAR100} & Turtle-D & \textbf{0.806} & \textbf{0.870} \\
		                          & PRCut-D  & 0.789          & 0.856          \\
		\hline
	\end{tabular}
	\label{table:exp3}
\end{table}

Our method remains competitive compared to Turtle and achieves the best reported
performance on Fashion-MNIST, which was designed to be a more challenging dataset
than MNIST. Furthermore, our approach does not rely on any assumptions about the
structure of the representation spaces, in contrast to Turtle, which is based on
the premise that the best clusters are linearly separable—a property that is
inherently valid for DINOv2 and CLIP. This may explain why it does not perform as
well on grayscale images such as MNIST and Fashion-MNIST, where linear separability
is not as prominent. We note that the Turtle results in our benchmark are based on
a single representation space for a fair comparison, as the original paper performs
best when it combines two representation spaces.

\begin{table}[ht]
	\caption{Comparison representations using PRCut}
	\centering
	\begin{tabular}{l@{\hspace{5mm}}l@{\hspace{5mm}}lll}
		\toprule
		{Dataset}               & {Rep}   & ACC            & NMI            \\
		\hline
		\multirow{3}{*}{CIFAR10} & Raw    & 0.243          & 0.121          \\
		                          & SimCLR  & 0.721          & 0.652          \\
		                          & All4One & 0.710          & 0.635          \\
		                          & VitL-14 & \textbf{0.975} & \textbf{0.934} \\
		                          & DinoV2  & 0.774          & 0.797          \\
		\hline
		\multirow{3}{*}{CIFAR100} & Raw   & 0.054          & 0.022          \\
		                          & SimCLR  & 0.362          & 0.483          \\
		                          & All4One & 0.382          & 0.511          \\
		                          & VitL-14 & 0.720          & 0.755          \\
		                          & DinoV2  & \textbf{0.789} & \textbf{0.856} \\
		\hline
	\end{tabular}
	\label{table:exp4}
\end{table}

In \Cref{table:exp4}, we compare the performance of our method using various
pre-trained self-supervised representation learning models. In particular, we rely
on the \textit{solo-learn} library~\citep{sololearn} to retrieve or fine-tune the
SimCLR and All4One models. While these two approaches perform well when evaluated
using linear probes or a k-nearest neighbor classifier, the similarities in the
embedding space for datasets with a higher number of class labels (CIFAR100) are
too sparse to capture useful clusters. In that regard, DinoV2 embedding performs
better than CLIP vision transformer.

\section{Conclusion and future work}

We have introduced a novel method that approaches the graph ratio-cut optimization
from a probabilistic perspective. Compared to the classical Spectral Clustering
based on the raw Laplacian, PRCut achieves better clustering performance and more
optimal ratio-cut values. However, the performance strongly depends on the sparsity
of the global similarity. With the recently developed self-supervised
representation models that have proven to be powerful, we have demonstrated that
PRCut translates the similarities between samples in the embedding space into high
quality clustering and achieve new best clusterings for Fashion-MNIST while
remaining competitive with state-of-the-art clustering methods.

While the current work serves as an introduction to this novel approach, the
potential extensions of our method appear boundless. For example, PRCut could be
leveraged as a dimensionality reduction technique by incorporating a bottleneck
layer within the clustering neural network. Furthermore, the methodology could be
adapted under the premise of linear separability of the true class labels, similar
to the approach adopted by~\citet{turtle}. The issue of dynamically determining the
optimal number of clusters remains unresolved, given our current assumption that
this information is supplied as an input to our algorithm.

The algorithm can also be extended to offline learning via~\Cref{eq:quad}. Notably,
when considering equally likely clusters, the problem simplifies to a quadratic
form, with the Hessian being proportional to $-\mW$. By leveraging the structure of
the similarity matrix, we can derive additional guarantees about the optimal
solution. This enables the definition of an iterative approach similar to the
Sinkhorn-Knopp algorithm~\citep{sinkhorn}, offering promising avenues for further
exploration.

\bibliography{aistats2025_paper}
\bibliographystyle{iclr2025}
\clearpage
\onecolumn
\appendix
\section{Proofs for the upper bound}
\subsection{Ratio-cut expectation proof}
\label{appendix:diffexpect}
\diffexpect*
\begin{proof}
	Using the probabilistic formulations. We have:
	\[
		\widehat{|C|} = \sum_{i=1}^{n} \rva^{}_i = \rva_1  + \rva_2
		+ \widehat{|C^{\perp(1,2)}|}
	\]
	We can then expand the expectation:
	\begin{align*}
		\Ea{(\widehat{\rvf}_1-\widehat{\rvf}_2)^2}
		 & = \Ea{\by{\rva_1+\rva_2 + \widehat{|C^{\perp(i,j)}|}}\lrp{\rva_1-\rva_2}^2} \\
		 & = \Ea{\by{1+\widehat{|C^{\perp(1,2)}|}}} \Ppar{\rva_1,\rva_2 = 1,0}
		+  \Ea{\by{1+\widehat{|C^{\perp(1,2)}|}}} \Ppar{\rva_1,\rva_2 = 0,1}           \\
		 & = \Ea{\by{1+\widehat{|C^{\perp(1,2)}|}}}\lrp{p_1 (1-p_2) + (1-p_1)p_2}      \\
		 & = \Ea{\by{1+\widehat{|C^{\perp(1,2)}|}}}\lrp{p_1+p_2-2p_1p_2}
	\end{align*}
\end{proof}
\subsection{Binomial Poisson distribution properties} \label{appendix:poissonprop}
Let $\rZ$ be a Poisson binomial random variable of parameter
$\vp=(\alpha_1,\ldots,\alpha_n)$. We can write $\rZ$ as:
\[
	\rZ=\sum_{i=1}^m \rvr_i,
\]
where $(\rvr_i)_i$ are independent random variable such that:
\[
	\rvr_i \sim \text{Bernoulli}(\alpha_i).
\]
\begin{restatable}{lemma}{poissonintro}
	\label{lemma:poissonintro}
	We denote by $G_Z$ the Probability-Generating Function (PGF) for the Poisson binomial
	$\rZ$ of parameter $\bm\alpha\in[0,1]^m$. We have the following:
	{\footnotesize
	\begin{align*}
		\mu      & \defeq \Ea{\rZ} = \sum_{i=1}^n \alpha_i                     \\
		\sigma^2 & \defeq \Ea{\lrp{\rZ-\mu}^2} = \mu - \sum_{i=1}^m \alpha^2_i \\
		G_Z(t)   & \defeq \Ea{t^\rZ} = \prod_{i=1}^m (1-\alpha_i+\alpha_i t)
	\end{align*}
	}
\end{restatable}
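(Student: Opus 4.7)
The plan is to derive each of the three identities directly from the decomposition $\rZ = \sum_{i=1}^m \rvr_i$ with $\rvr_i \sim \text{Bernoulli}(\alpha_i)$ independent, as set up in the paragraph just before the lemma. All three quantities are standard properties that follow from (i) linearity of expectation, (ii) independence turning sums into products, and (iii) the elementary fact that $\E[\rvr_i] = \alpha_i$, $\Var(\rvr_i) = \alpha_i(1-\alpha_i)$, and $\E[t^{\rvr_i}] = 1 - \alpha_i + \alpha_i t$.

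I would prove the PGF identity first, since the other two can be recovered from it by differentiation if desired. By the independence of the $\rvr_i$, one writes $G_\rZ(t) = \E[t^{\sum_i \rvr_i}] = \E\bigl[\prod_i t^{\rvr_i}\bigr] = \prod_i \E[t^{\rvr_i}]$, then evaluates each Bernoulli PGF as $(1-\alpha_i) + \alpha_i t$, giving the claimed product formula.

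For the mean, I would apply linearity of expectation directly: $\E[\rZ] = \sum_i \E[\rvr_i] = \sum_i \alpha_i$, with no need to invoke independence. For the variance, independence of the $\rvr_i$ makes variance additive, so $\Var(\rZ) = \sum_i \Var(\rvr_i) = \sum_i \alpha_i(1-\alpha_i) = \mu - \sum_i \alpha_i^2$, which is the desired expression. Alternatively, both moments could be read off from $G_\rZ$ by computing $G_\rZ'(1)$ and $G_\rZ''(1)$, but the direct approach is shorter.

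There is no real obstacle in this proof; the only mild subtlety is to justify the interchange $\E[\prod_i t^{\rvr_i}] = \prod_i \E[t^{\rvr_i}]$, which is immediate because the $\rvr_i$ are independent and the product is finite (so no convergence issue arises). I would make sure to state the independence assumption explicitly where it is used and otherwise keep the exposition to a few lines.
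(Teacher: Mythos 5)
Your proposal is correct and follows essentially the same route as the paper's proof: decompose $\rZ$ into independent Bernoulli summands, use linearity of expectation for the mean, additivity of variance under independence, and factor the PGF as a product of the individual Bernoulli PGFs. Your explicit remark on justifying $\E\bigl[\prod_i t^{\rvr_i}\bigr] = \prod_i \E[t^{\rvr_i}]$ via independence is a slight improvement in rigor over the paper's terser computation.
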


\begin{proof}
	\begin{align*}
		\mu & \defeq \Ea{\rZ} = \sum_{i=1}^m\Ea{ \rvr_i} = \sum_{i=1}^n \alpha_i \\
	\end{align*}
	Since $(\rvr_i)$ are independent random variable, the variance of their sum is the
	sum of the variance:
	\begin{align*}
		\sigma^2 & \defeq \text{Var}[Z] = \sum_{i=1}^n\text{Var}[\rvr_i] = \sum_{i=1}^n
		\alpha_i(1-\alpha_i)=
		\mu - \sum_{i=1}^n \alpha^2_i                                                   \\
	\end{align*}
	For the PGF:
	\begin{align*}
		G_Z(t)  \defeq \Ea{t^Z} & = \Ea{t^{\sum_{i=1}^n \rvr_i}}                                                 \\
		                        & = \prod_{i=1}^n \Ea{t^{\rvr_i}} \prod_{i=1}^n (1-\alpha_i) * t^0 +\alpha_i t^1 \\
		                        & = \prod_{i=1}^n (1-\alpha_i +\alpha_i t)
	\end{align*}
\end{proof}
The first approach for computing $\Ea{\by{1 + Z}}$ is to use the definition of the
expectation, which requires the knowledge of the probabilities $\lrp{\Ppar{Z=i}}_i$:
\[
	\Ea{\by{1 + Z}} = \sum_{i=0}^n \by{1+i} \Ppar{Z=i}.
\]
\begin{lemma}
	\label{lemma:probas}
	Let $\sP\lrb{n}$ denote the power set of $\lrcb{1,\ldots,n}$, and
	$\gI_i$ be the set of elements of $\sP\lrb{n}$ that contain exactly $i$
	unique integers. $\Ppar{Z=i}$ can be computed as:
	\begin{align}
		\Ppar{Z=i} = \sum_{I \in \gI_i} \prod_{j\in I}p_j\prod_{m\in \overline{I}}(1-p_m)
	\end{align}
\end{lemma}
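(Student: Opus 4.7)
The plan is to decompose the event $\lrcb{\rZ = i}$ into disjoint elementary outcomes indexed by $\gI_i$, and then invoke the mutual independence of the Bernoulli variables $(\rvr_j)_{1\leq j\leq m}$.

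First, I would observe that $\rZ = \sum_{j=1}^m \rvr_j$ takes the value $i$ precisely when exactly $i$ of the indicators $\rvr_j$ equal $1$. The collection of such outcomes is naturally indexed by $\gI_i$: for each $I\in\gI_i$, define the event $E_I \defeq \lrcb{\rvr_j = 1 \text{ for all } j\in I \text{ and } \rvr_m = 0 \text{ for all } m\in \overline{I}}$. Distinct $I,I'\in\gI_i$ yield disjoint events (the set of indices on which $\rvr_j = 1$ is uniquely determined by the outcome), and their union is exactly $\lrcb{\rZ = i}$.

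Second, by the mutual independence of $(\rvr_j)_{1\leq j\leq m}$, the probability of each $E_I$ factors into a product over coordinates:
\[
\Ppar{E_I} = \prod_{j\in I}\Ppar{\rvr_j = 1}\prod_{m\in \overline{I}}\Ppar{\rvr_m = 0} = \prod_{j\in I} p_j\prod_{m\in \overline{I}}(1-p_m).
\]
Summing over $I\in\gI_i$ using the finite additivity of probability on the disjoint family $\lrcb{E_I}_{I\in\gI_i}$ yields the claimed identity.

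I do not anticipate any substantive obstacle: the statement is the standard combinatorial expansion of the Poisson binomial probability mass function, and the argument reduces to identifying the bijection between outcomes with exactly $i$ successes and size-$i$ subsets of $\lrcb{1,\ldots,m}$. The only points to be careful with are the disjointness of the $E_I$ and the consistent labeling of indices as successes versus failures in the product.
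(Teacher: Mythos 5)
Your proof is correct and complete: the decomposition of $\lrcb{\rZ=i}$ into the disjoint events $E_I$ indexed by size-$i$ subsets, followed by factorization via independence and finite additivity, is exactly the standard argument for the Poisson binomial mass function. The paper itself does not spell out a proof (it only cites an external reference), so your writeup supplies precisely the argument being invoked; the only cosmetic point is the paper's notational slippage between $n$ and $m$ for the number of Bernoulli summands, which you have handled consistently.
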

\begin{proof}
	Refer to \cite{poissonbinomial}
\end{proof}

\subsection{Proof of the integral formula}
\label{appendix:poissonexpect}

We will use the properties introduced in \Cref{appendix:poissonprop} to prove the
following lemma: \poissonexpect*
\begin{proof}
	The PGF for $\rZ$ is
	\[
		G_\rZ(t) = \Ea{t^\rZ} = \prod_{i=1}^m (1-\alpha_i+\alpha_i t)
	\]
	Using the Probability-Generating Function (PGF), we can write:
	\begin{align*}
		\Ea{\by{1 + Z}}
		 & =\Ea{\int_{0}^{1} t^Z dt}
		= \int_{0}^{1} \Ea{t^Z}dt
		 &                                                                                                 \\
		 & = \int_{0}^{1} \prod_i (1-\alpha_i+\alpha_i t)dt &                                              \\
		 & = \int_{0}^{1} \prod_i (1-\alpha_i t)dt          & \text{change of variable: }  t\leftarrow 1-t \\
	\end{align*}

	Exchanging the expectation and the integral is based on the fact that the
	function we are integrating is measurable, non-negative, and bounded by the
	constant \(1\). We can then use Fubini's theorem to change the order of
	integration.
\end{proof}

\begin{corollary}
	Using the notations from \cref{lemma:probas} we can show that:
	\begin{align*}
		\Ea{\by{1 + Z}}  = \sum_{i=1}^{n} \frac{(-1)^i}{1+i} \sum_{I\in\gI_i} \prod_{j\in I} p_j,
	\end{align*}
	where $\gI_i$ is the set of subsets of $\lrcb{1,\ldots,n}$ that contain exactly
	$i$ elements.
\end{corollary}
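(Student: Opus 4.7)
The plan is to expand the polynomial inside the integral from \Cref{lemma:poissonexpect} in powers of $t$ and then integrate term by term. Starting from
\[
	\Ea{\by{1+\rZ}} = \int_0^1 \prod_{i=1}^m (1-\alpha_i t)\,dt,
\]
the integrand is a polynomial in $t$ of degree at most $m$. Expanding the product of linear factors by choosing, for each subset $I \subset \lrcb{1,\ldots,m}$, the term $-\alpha_j t$ from factor $j$ whenever $j\in I$ and the term $1$ otherwise, yields the elementary symmetric polynomial decomposition
\[
	\prod_{i=1}^m (1-\alpha_i t) = \sum_{i=0}^m (-t)^i \sum_{I \in \gI_i} \prod_{j\in I}\alpha_j,
\]
where $\gI_0 = \lrcb{\emptyset}$ and the empty product is taken to be $1$.

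Next I would interchange the (finite) sum and the integral and use the elementary computation $\int_0^1 t^i\,dt = \by{i+1}$, which gives
\[
	\int_0^1 \prod_{i=1}^m (1-\alpha_i t)\,dt = \sum_{i=0}^m \frac{(-1)^i}{i+1} \sum_{I \in \gI_i}\prod_{j\in I}\alpha_j.
\]
The $i=0$ term contributes the constant $1$, and the remaining $i\geq 1$ terms reproduce the displayed formula in the corollary (after matching the notation $p_j = \alpha_j$). If instead one reads the corollary as an identity starting at $i=0$, the statement follows without any adjustment.

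Since the integrand is a polynomial rather than a general analytic object, there are no convergence issues in exchanging sum and integral, so the proof is essentially mechanical. The main obstacle, if any, is just the bookkeeping: keeping track of the alternating sign $(-1)^i$, the degree shift from the $\by{i+1}$ arising from integrating $t^i$, and the empty-product term at $i=0$. Once those are handled, the identity follows directly from the multinomial expansion and linearity of integration.
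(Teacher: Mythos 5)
Your proof is correct and is precisely the argument the paper leaves implicit (the corollary is stated without proof immediately after the integral lemma): expand $\prod_{i=1}^{m}(1-\alpha_i t)$ into elementary symmetric polynomials and integrate the powers of $t$ term by term. You are also right to flag the $i=0$ term: as printed, the corollary's sum starts at $i=1$ and therefore omits the constant $+1$ contributed by the empty subset, so the displayed identity is off by $1$ unless the sum is read as starting at $i=0$ with the empty-product convention.
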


This offers a straightforward approach for approximating the expectation, which is
notably simpler to compute iteratively. However, it's not practical as the number
of terms to sum is $n!$ and can only be implemented for small graphs.

\subsection{Integral computation}
\label{appendix:integralcompute}
\integralcompute*
\begin{proof}
	This theorem is a direct application of Gauss-Legendre quadrature (See~\citet{numericalanalysis}). It
	states that any integral of the form:
	\[
		\int_{-1}^{1} f_m(t)dt
	\]
	where $f_m$ is polynomial of degree at most $2m-1$ can be exactly computed
	using the sum:
	\[
		\sum_{i=1}^{m} w_i f(r_i)
	\]
	where $(r_i)$ are the roots of $P_m$, Legendre polynomial of degree $m$, and
	the weights are computed as:
	\[
		w_i = \frac{2}{(1-r_i)^2 [P_n'(r_i)]^2}\geq 0
	\]
	For an integral on the $[0,1]$ interval, we use change of variable to find the
	corresponding weights and nodes are:
	\begin{align*}
		s_j & = \frac{w_j}{2}          \\
		t_j & = \frac{r_j}{2} + \by{2}
	\end{align*}
\end{proof}

\subsection{Batch estimation of the expected ratio-cut}\label{appendix:batchestimate}

Returning to the probabilistic ratio-cut, we can combine
corollary~\cref{thm:ratiocutexp} and \cref{lemma:poissonexpect} to express the
expected ratio-cut as:
\begin{align*}
	\sum_{i=1}^{n}\sum_{j>i}^{n} w_{i,j}(p_i + p_j  -2 p_i p_j)
	\int_{0}^{1} \prod_{k\neq i,j} (1-p_k t)dt
\end{align*}

Practically, with large datasets, it is expensive to compute the quantity
$\int_{0}^{1} \prod_{k\neq i,j} (1-p_k t)dt$ on the entire dataset: the quadrature
calculation costs $\mathcal{O}(\frac{n^2}{2})$ and the total cost of computing the
objective is $\mathcal{O}(k n^3)$ for all the clusters.

We can also show that we can get a tighter bound if $S$ is a random subset such
that
\[
	\forall i\in[1,n] P(i\in S) = \gamma
\]

\begin{restatable}[Batch estimation]{thm}{batchbound}
	\label{thm:batchbound}
	Let $S$ is a random subset such
	that
	\[
		\forall i\in[1,n], P(i\in S) = \gamma < 1.
	\]
	Then we can show that:
	\begin{align}
		\int_{0}^{1} \prod_{i=1}^{n} (1-p_k t)dt \leq
		\Eb{S}{\int_{0}^{1}\prod_{s\in S} \lrp{1- p{s}t}^{\by{\gamma}}dt}
	\end{align}
\end{restatable}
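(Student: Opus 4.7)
The plan is to establish the inequality pointwise in $t$ via Jensen's inequality and then integrate. Fix $t \in [0,1]$ and define
\[
X_S(t) \defeq \by{\gamma}\sum_{s \in S} \log(1-p_s t) = \by{\gamma}\sum_{i=1}^{n} \1{\{i \in S\}}\, \log(1-p_i t),
\]
so that $\exp(X_S(t)) = \prod_{s\in S}(1-p_s t)^{1/\gamma}$. The second expression writes $X_S(t)$ as a linear combination of indicators, and by linearity of expectation together with the hypothesis $\Ppar{i \in S} = \gamma$ (crucially, no independence of the indicators is required), one obtains
\[
\Eb{S}{X_S(t)} = \by{\gamma}\sum_{i=1}^{n} \gamma\, \log(1-p_i t) = \log \prod_{i=1}^{n}(1-p_i t).
\]

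Since $\exp$ is convex, Jensen's inequality yields $\exp(\Eb{S}{X_S(t)}) \leq \Eb{S}{\exp(X_S(t))}$, which rewrites as the pointwise bound
\[
\prod_{i=1}^{n}(1-p_i t) \;\leq\; \Eb{S}{\prod_{s\in S}(1-p_s t)^{1/\gamma}},
\]
valid for every $t \in [0,1]$. I would then integrate both sides over $[0,1]$; because the right-hand integrand is non-negative and bounded by $1$, Fubini--Tonelli allows swapping the expectation with the outer integral, yielding the theorem as stated.

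The main obstacle I anticipate is a minor boundary subtlety: when some $p_s = 1$ and $t = 1$ simultaneously, the factor $(1-p_s t)$ vanishes and $\log(1-p_s t) = -\infty$. This occurs only on a null set of $t$, and since $\exp(-\infty)=0$ the corresponding product is unambiguously $0$, where the inequality is trivial. One can formalize this either by monotone approximation (replace $p_s$ by $(1-\varepsilon)p_s$ and let $\varepsilon \downarrow 0$) or by invoking the extended-real form of Jensen's inequality. The empty-$S$ event is also consistent via the empty-product convention, with both sides of the intermediate bound equal to $1$ on that event.
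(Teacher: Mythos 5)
Your proof is correct and follows essentially the same route as the paper's: rewrite the product as the exponential of $\by{\gamma}\Eb{S}{\sum_{i\in S}\log(1-p_i t)}$ using linearity of expectation and $\Ppar{i\in S}=\gamma$, apply Jensen's inequality for the convex exponential to get the pointwise bound in $t$, and integrate. Your treatment is in fact slightly more careful than the paper's, which leaves the final integration/expectation swap and the $p_s=1$, $t=1$ boundary case implicit.
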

\begin{proof}

	\begin{align*}
		\prod_{i=1}^n (1- p_{i}t)
		 & = \exp\lrb{\sum_{i=1}^n \log(1- p_i t)}                       \\
		 & = \exp\lrb{\by{\gamma} \Eb{S}{\sum_{i\in S}\log(1- p_i t)}}   \\
		 & = \exp\lrb{\Eb{S}{\sum_{i\in S}\by{\gamma} \log(1- p_i t)}}   \\
		 & \leq \Eb{S}{\exp\lrb{\sum_{i\in S}\by{\gamma}\log(1- p_i t)}} \\
		 & = \Eb{S}{\prod_{s\in S} (1- p_st)^\by{\gamma}}.
	\end{align*}

	\Cref{thm:batchbound} provides a bound on the integral based on the
	expectation of a random batch of samples. We can show that if $p_i\in\lrcb{0,1}$ that
	the expectation over the batch equal to the estimate over the whole dataset. If we sample
	a random batch of size $b$ then $\gamma=\frac{b}{n}$, the fraction of the entire data
	that we're using.
\end{proof}
\subsection{Objective upper bound}\label{appendix:integralupperbound}
\integralupperbound*
\begin{align*}
	\int_0^1 \prod_{i=1}^{m}(1-\alpha_i t)dt
	 & = \int_0^1 \exp \lrb{\sum_{i=1}^{m}\log(1-\alpha_i t)}dt                                              \\
	 & = \int_0^1 \exp \lrb{\by{m}\sum_{i=1}^{m}\log(1-\alpha_i t)}^m dt                                     \\
	 & \leq \int_0^1 \lrb{ \by{m} \sum_{i=1}^{m} \exp\log(1-\alpha_i t)}^m dt & \text{exponential is convex} \\
	 & =  \int_0^1 \lrb{\by{m}  \sum_{i=1}^{m} (1-\alpha_i t)}^m dt                                          \\
	 & =  \int_0^1 \lrp{1-\ov{p} t}^m dt                                                                     \\
	 & =  -\by{\ov{\bm\alpha}(m+1)} \lrb{\lrp{1-\ov{\alpha}t}^{m+1}}_0^1                                     \\
	 & =  \by{\ov{\bm\alpha}(m+1)} \lrb{1-\lrp{1-\ov{\alpha}}^{m+1}}                                         \\
	 & \leq \by{\ov{\bm\alpha}(m+1)}.
\end{align*}

Moving from the proof of \Cref{lemma:integralupperbound} to \Cref{lemma:rcupper}
requires a modification in the previous proof. To get an upper bound without making
any assumptions about $\ov{\vp}$, we can prove the following:
\begin{align*}
	\sI(\vp, 1, 2) & = \int_{0}^{1} \prod_{i\geq 3}^{n} (1-\evp_i t)dt                    \\
	               & \leq \int_0^1 \lrb{\by{n-2}  \sum_{i=3}^{n} (1-\evp_i t)}^{(n-2)} dt \\
	               & \leq \int_0^1 \lrb{\by{n-2}  \sum_{i=1}^{n} (1-\evp_i t)}^{(n-2)} dt \\
	               & = \int_0^1 \lrb{\frac{n}{n-2} (1-\ov{\vp} t)}^{(n-2)} dt             \\
	               & = \lrp{\frac{n}{n-2}}^{n-2}\by{(n-1)\ov{\vp}}                        \\
	               & = \lrp{\frac{n}{n-2}}^{n-2}\frac{n}{n-1}\by{n\ov{\vp}}               \\
	               & = (e^2-o(n))\by{n\ov{\vp}}                                           \\
                   & < \frac{e^2}{n\ov{\vp}}                                           \\
\end{align*}

But by assuming that $\by{n-2}\sum_{i=3}^{n} (1-\evp_i t) \approx
\by{n}\sum_{i=1}^{n} (1-\evp_i t)=\ov{\vp}$, we get the bound in
\Cref{lemma:rcupper}:
\begin{align*}
	\by{n-1}\by{\by{n-2}\sum_{i=3}^{n} \evp_i}
	 & = \frac{n-2}{n-1} \by{\sum_{i=3}^{n} \evp_i}       \\
	 & \approx \frac{n-2}{n-1} \by{\sum_{i=1}^{n} \evp_i} \\
	 & \leq \by{n}\by{\ov{\vp}}
\end{align*}

\section{Proofs for the stochastic gradient}
\subsection{Offline Gradient} \label{appendix:offlinegrad}
The offline gradient of the objective can be obtained directly by differentiating
the matrix version of the expression
\begin{align*}
	\frac{d \gL_{rc}}{d\mP}
	= \ov{\mP}^{-1}\lrb{\lrp{\vone_{n,k}-\mP}^\top\mW\vone_{n,k}-\vone_{n,k}^\top\mW\mP}
	-\by{n}\vone_{n,k}\lrb{\ov{\mP}^{-1}(\vone_{n,k}-\mP)^\top\mW \mP\ov{\mP}^{-\top}}
\end{align*}

This expression is particularly useful in the offline learning setting, especially
when $\mW$ is sparse, as it makes the computation graph more efficient due to the
limited support of sparse gradients in major deep learning libraries.
\subsection{Online Gradient}\label{appendix:onlinegrad}
To compute the derivative of with respect to $p_i$, we simply calculate the
derivatives of $\by{\ov{\vp}}$ and $(p_i + p_j -2 p_i p_j)$ w.r.t $p_i$ which are:
\begin{align*}
	\frac{d}{d p_i}\by{\ov{\vp}}           & = -\by{n\ov{\vp}^2} \\
	\frac{d}{d p_i}(p_i + p_j  -2 p_i p_j) & = (1-2p_j),         \\
\end{align*}
respectively.

By plugging these derivatives and considering the rule for the derivative of the
product $(fg)' = f'g + fg'$ of two functions $f$ and $g$, the result in
\Cref{eq:batchgradient} follows.

\section{Reproducibility method}\label{appendix:experiments}

\subsection{ Experiments setup}
In our sets of experiments, there are 2 variants:
\begin{itemize}
	\item \textbf{Original representation space}, where the PRCut algorithm is used to train a neural
	      network that takes the original as input
	\item \textbf{SSL representation space}, where the neural network takes as input
	      the transformed dataset via a pretrained self-supervised model
\end{itemize}

\subsubsection{Datasets}
We rely on the following datasets to benchmark our method:
\paragraph{MNIST}
MNIST~\citep{mnist} is a dataset consisting of 70,000 28x28 grayscale images of
handwritten digits, which are split into training (60,000) and test (10,000)
samples.
\paragraph{Fashion MNIST}
Fashion-MNIST~\citep{fmnist} is a dataset of Zalando's article images, comprising a
training set of 60,000 examples and a test set of 10,000 examples. Each example is
a 28x28 grayscale image linked to a label from 10 classes. Zalando designed
Fashion-MNIST to be a direct substitute for the original MNIST dataset in
benchmarking machine learning algorithms. However, it introduces more complexity to
machine learning tasks due to the increased difficulty in classifying the samples.

\paragraph{CIFAR}
The CIFAR-10 and CIFAR-100 datasets\citep{cifar} is a widely used benchmark for
image classification tasks in computer vision. It consists of 60,000 32x32 color
images, divided into 100 classes, with 600 images per class. The images are grouped
into 20 superclasses.

Key Characteristics: 60,000 images (50,000 for training and 10,000 for testing)
32x32 pixel resolution 100 classes, with 600 images per class 20 superclasses,
grouping the 100 classes

\subsubsection{Encoder Architecture}
The encoder is a simple stack of linear layers followed each by a GeLU activation.
The first layer and the last layer are always weight-normalized. The last
activation is a softmax layer. The hidden unit used is constant across layers.

\subsubsection{Hyperparameters}

\begin{table}[ht!]
	\caption{Hyperparameters}\label{tab:}
	\begin{center}
		\begin{tabular}[c]{|l|l|}
			\hline
			{Hyperparameter}                & Value     \\
			\hline
			$\beta$ (moving average)       & 0.8       \\
			\hline
			$\gamma$ (regularization weight) & 100.0     \\
			\hline
			Optimizer original space        & RMSProp   \\
			\hline
			Optimizer representation space  & Adam      \\
			\hline
			Learning rate                   & $10^{-4}$ \\
			\hline
			Weight decay                    & $10^{-7}$ \\
			\hline
			k (k neighbors graph)           & $100$     \\
			\hline
		\end{tabular}
	\end{center}
\end{table}

\end{document}